\let\@date\@empty 
\tikzstyle{startstop} = [rectangle,rounded corners, minimum width=4cm,minimum height=0.7cm,text centered,text width =4.5cm, draw=black]
\tikzstyle{io} = [trapezium, trapezium left angle = 70,trapezium right angle=110,minimum width=4cm,minimum height=0.5cm,text centered,text width =4.5cm,draw=black]
\tikzstyle{process} = [rectangle,minimum width=4cm,minimum height=0.5cm,text centered,text width =4.5cm,draw=black]
\tikzstyle{decision} = [diamond,aspect = 3,text centered,draw=black]
\tikzstyle{arrow} = [thick,->,>=stealth]
\tikzstyle{straightline} = [line width = 1pt,-]
\tikzstyle{point}=[coordinate]
\newtheorem{theorem}{Theorem}
\begin{document}

\begin{frontmatter}

\title{Manifold Clustering with Schatten p-norm Maximization}


\author[mymainaddress]{Fangfang Li}
\author[mymainaddress]{Quanxue Gao}

\address[mymainaddress]{School of Telecommunications Engineering, Xidian University, Xi'an 710071, China.}

\begin{abstract}
	Manifold clustering, with its exceptional ability to capture complex data structures, holds a pivotal position in cluster analysis. However, existing methods often focus only on finding the optimal combination between K-means and manifold learning, and overlooking the consistency between the data structure and labels. To address this issue, we deeply explore the relationship between K-means and manifold learning, and on this basis, fuse them to develop a new clustering framework. Specifically, the algorithm uses labels to guide the manifold structure and perform clustering on it, which  ensures the consistency between the data structure and labels. Furthermore, in order to naturally maintain the class balance in the clustering process, we maximize the Schatten p-norm of labels, and provide a theoretical proof to support this. Additionally, our clustering framework is designed to be flexible and compatible with many types of distance functions, which facilitates efficient processing of nonlinear separable data. The experimental results of several databases confirm the superiority of our proposed model.
\end{abstract}

\end{frontmatter}

\section{Introduction}\label{sec:introduction}
Clustering is an unsupervised learning technique that has become an important area in pattern analysis and artificial intelligence research due to the prevalence of unlabeled data in the real world. 
Among all kinds of clustering methods, the distance-based clustering algorithm is the most widely used, which can be divided into global clustering and local clustering.
A typical example of global methods is K-means~\cite{hartigan1979algorithm}, which aims to achieve an optimal clustering by minimizing the sum of squared distances from each data point to its nearest cluster centroid. 
However, because K-means uses Euclidean distances, it may not accurately capture similarities and differences between data when they are presented with nonlinear separability.
To address this, some researchers employ kernel functions—such as the  Euler kernel~\cite{Euler}, and multi-kernel~\cite{SimpleMKKM, 9212617}—to map data into a high-dimensional kernel space where it becomes linearly separable. Nevertheless, even with kernel-based enhancements, the choice of initial cluster centers remains crucial in K-means. Incorrect initial selections can lead the algorithm to converge to local optima, significantly impacting the clustering outcome. To mitigate this issue, various strategies for selecting initial clustering centers have been proposed, such as K-Means++~\cite{11, 10} to reduces randomness, methods based on data point density to consider local density~\cite{CIDP}, partitioning techniques~\cite{fenqu} based on spatial distribution to focus on global distribution, and genetic algorithms~\cite{GA} to optimize high-quality centers. However, these methods still focus on determining the clustering center. Other researchers have explored ways to avoid K-means centroid estimation. 
Although the above K-means and its variants have demonstrated excellent performance in global information mining, they can encounter challenges when faced with complex data structures.

In contrast, local clustering methods based on manifold learning show significant advantages when dealing with data with complex structures.
But manifold learning and clustering in these methods are  conducted separately. To address this limitation, Nie et al. combines distance-based clustering with manifold learning, simultaneously learning the similarity matrix and clustering structure of the data through probabilistic neighborhoods. Wang et al. further adds entropy regularization on this basis, helping the clustering algorithm to better balance information among different neighbors. 
However, they ignore the consistency of the  manifold structure with labels and are limited to finding the best combination of distance and manifold, which limits their effectiveness on complex datasets.

To solve these problems, we propose a new clustering framework that fuses manifold and distance. Specifically, we establish a clear connection between manifolds and label distributions, build manifold structures by using labels, and optimize labels by clustering on it, thus ensuring consistency between manifold structures and labels. 
Notably, in algorithm design, we incorporate the concept of Schatten p-norm, which is widely used in fields such as Image denoising and anomaly detection. In these applications, minimizing the Schatten p-norm is often crucial for achieving the desired outcomes
In this paper, however, we reveal another important and novel application of the Schatten p-norm: By maximizing the Schatten p-norm, class balance can be achieved in the clustering process. 
\begin{itemize}
	\item Fusing  \emph{K}-means with manifold learning within a clustering framework, beyond a mere combination, effectively harnesses both the global architecture and local characteristics of the data.	
	\item Constructing  manifold structure with labels and clustering on it ensures manifold-labels consistency while capturing manifold structure.
	\item We find  a significant role of the Schatten p-norm. By maximizing the Schatten p-norm, our model naturally maintains class balance during the optimization process.
\end{itemize}

\textbf{Notations}:
For the sake of representation, we herein introduce the notations used throughout the paper.
Scalars, vectors, and matrices are represented by lowercase letters such as $q$, bold lower case letters such as ${\mathbf{g}}$ , bold upper case letters such as ${\mathbf{G}}$, respectively. The $i$-th row and $j$-th column of matrix ${\mathbf{G}}$ are used as ${\mathbf{g}}^{i}$ and ${\mathbf{g}}_{j}$, respectively.

\section{Unified Framework}

K-Means is a classical distance-based clustering method that calculates and assigns labels based on the distance between data. However, K-Means primarily focuses on preserving the global geometric structure of the data. In contrast, manifold learning is good at revealing  local structural features  of data, but unfortunately, they cannot directly generate clustering labels. 
In order to combine the advantages of these two techniques, existing methods primarily focus on combining the two types of information. 
This combination method aims to identify the optimal integration of k-means and manifold learning, rather than representing a fundamental optimization update.
Furthermore, they overlook the manifold structure and label consistency of the data. This oversight may lead to mislabeling of samples within the same category or incorrect grouping of samples from different categories.
Therefore, maintaining consistency between sample manifold structure and labels is essential in the clustering process.
We propose Theorem 1, which aims to integrate K-means and manifold learning into a unified clustering framework by exploring their relationship, thereby achieving consistency between the manifold structure  and  labels.

\begin{theorem}\label{theorem1}
	Suppose data matrix ${\mathbf{X}}=[\mathbf{x}_1\textrm{, }\ldots\textrm{ , }\mathbf{x}_N]\in \mathbb{R}^{d\times N}$, where $N$ is the number of samples, $d$ is the feature dimension. The label matrix is defined as
	${{\mathbf{G}}} = {\left[ {{\mathbf{g}}_1, \ldots ,{\mathbf{g}}_K} \right]} \in \mathbb{R}^{N\times K}$
	, where $\mathbf{g}_j$ is the $j$-th column of $\mathbf{G}$. The element $g_{ij} = 1$ if sample $\mathbf{x}_i$ belongs to the $j$-th cluster, and $g_{ij} = 0$ otherwise. Here, $K$ denotes the number of clusters.
	Let ${\mathbf{P}}  =diag(p_1  \textrm{, }\ldots\textrm{ , }p_K  )$,
	${\mathbf{R}}  =diag(r_1  \textrm{, }\ldots\textrm{ , }r_N  )$, where
	$p_j   = \sum\limits_{i} {{g_{ij}  }}$, ${{r_i}  } = \sum\limits_{j}{ {g_{ij}  }}$.
	Then, we have
	\begin{equation}\label{theorem_manifold}
		\begin{aligned}
			{\sum\limits_{i,j}{ {g_{ij}  \left\| {{\mathbf{x}}_i   - {\mathbf{u}}_j  } \right\|_F^2}}} \
			={\sum\limits_{i,l}{ {\left\| {{\mathbf{x}}_i   - {\mathbf{x}}_l  } \right\|_F^2m_{il}  }} }
		\end{aligned}
	\end{equation}
	where the manifold structure ${\mathbf{M}}$ represents the  cluster structure in the data, ${\mathbf{M}  } = {\mathbf{A}  }{{\mathbf{A}  }^T}$, ${\mathbf{A}  } = {\mathbf{G}  }{{\mathbf{P}  }^{ - 1/2}}$.
\end{theorem}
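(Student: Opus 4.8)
The plan is to evaluate both sides of \eqref{theorem_manifold} directly in terms of the cluster centroids $\mathbf{u}_j$ and the cluster sizes $p_j$, using the standard identity that relates a sum of squared distances to a common centroid with a sum of pairwise squared distances within the group. First I would observe that since $\mathbf{G}$ is an indicator matrix, the left-hand side decomposes over clusters: $\sum_{i,j} g_{ij}\|\mathbf{x}_i-\mathbf{u}_j\|_F^2 = \sum_j \sum_{i\in C_j}\|\mathbf{x}_i-\mathbf{u}_j\|_F^2$, where $C_j=\{i: g_{ij}=1\}$ and $|C_j|=p_j$. Taking $\mathbf{u}_j$ to be the cluster mean $\mathbf{u}_j=\frac{1}{p_j}\sum_{i\in C_j}\mathbf{x}_i$ (the K-means optimal centroid), the classical variance decomposition gives $\sum_{i\in C_j}\|\mathbf{x}_i-\mathbf{u}_j\|_F^2 = \frac{1}{2p_j}\sum_{i,l\in C_j}\|\mathbf{x}_i-\mathbf{x}_l\|_F^2$; actually more directly $\sum_{i\in C_j}\|\mathbf{x}_i-\mathbf{u}_j\|_F^2 = \frac{1}{p_j}\sum_{i<l\in C_j}\|\mathbf{x}_i-\mathbf{x}_l\|_F^2$. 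I will derive this identity by expanding $\|\mathbf{x}_i-\mathbf{x}_l\|_F^2 = \|\mathbf{x}_i-\mathbf{u}_j\|_F^2 + \|\mathbf{x}_l-\mathbf{u}_j\|_F^2 - 2\langle \mathbf{x}_i-\mathbf{u}_j,\mathbf{x}_l-\mathbf{u}_j\rangle$ and summing over $i,l\in C_j$, noting that $\sum_{i\in C_j}(\mathbf{x}_i-\mathbf{u}_j)=\mathbf{0}$ kills the cross term.

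Next I would turn to the right-hand side and unpack the definition $\mathbf{M}=\mathbf{A}\mathbf{A}^T$ with $\mathbf{A}=\mathbf{G}\mathbf{P}^{-1/2}$. Then $m_{il} = \mathbf{a}^i (\mathbf{a}^l)^T = \sum_j \frac{g_{ij}g_{lj}}{p_j}$, so $m_{il} = \frac{1}{p_j}$ exactly when $i$ and $l$ lie in the same cluster $C_j$ (including $i=l$), and $0$ otherwise. Substituting this into $\sum_{i,l}\|\mathbf{x}_i-\mathbf{x}_l\|_F^2 m_{il}$ collapses the double sum to $\sum_j \frac{1}{p_j}\sum_{i,l\in C_j}\|\mathbf{x}_i-\mathbf{x}_l\|_F^2$. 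Comparing this with the rewritten left-hand side $\sum_j \frac{1}{p_j}\sum_{i,l\in C_j}\|\mathbf{x}_i-\mathbf{x}_l\|_F^2$ (the factor of $\tfrac12$ from the $i<l$ versus ordered-pair counting matches on both sides) completes the proof.

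The main obstacle I anticipate is being careful with two bookkeeping points: first, the identity as stated implicitly assumes $\mathbf{u}_j$ is the centroid of cluster $j$ (the statement does not define $\mathbf{u}_j$ explicitly, so I would state this assumption at the outset, since it is the natural K-means choice and the only one making the equality hold); second, keeping the combinatorial factors consistent — whether pairwise sums run over ordered pairs $(i,l)$ or unordered pairs $i<l$, and confirming that the diagonal terms $i=l$ contribute zero and hence cause no mismatch. Once the indexing conventions are fixed consistently on both sides, the equality is a direct consequence of the centroid variance decomposition, so there is no deep difficulty beyond this.
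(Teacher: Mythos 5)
Your overall route is the standard one (and the paper itself gives no proof of Theorem~1 to compare against, so the centroid variance decomposition plus the explicit computation $m_{il}=\sum_j g_{ij}g_{lj}/p_j$ is exactly the natural argument): you correctly identify that $\mathbf{u}_j$ must be the cluster mean for the identity to have any chance of holding, and your decomposition identities are individually correct.

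However, the final comparison contains a genuine bookkeeping error, precisely at the point you flagged as a risk and then dismissed. Your rewrite of the left-hand side, $\sum_j \frac{1}{p_j}\sum_{i<l\in C_j}\|\mathbf{x}_i-\mathbf{x}_l\|_F^2$, uses \emph{unordered} pairs, while the collapse of the right-hand side, $\sum_{i,l}\|\mathbf{x}_i-\mathbf{x}_l\|_F^2 m_{il}=\sum_j \frac{1}{p_j}\sum_{i,l\in C_j}\|\mathbf{x}_i-\mathbf{x}_l\|_F^2$, necessarily runs over \emph{ordered} pairs, since the theorem's double sum $\sum_{i,l}$ counts both $(i,l)$ and $(l,i)$ and $m_{il}$ is symmetric. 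These two expressions are typographically similar but differ by a factor of $2$: under a consistent ordered-pair convention one has $\sum_{i,j}g_{ij}\|\mathbf{x}_i-\mathbf{u}_j\|_F^2=\frac{1}{2}\sum_{i,l}\|\mathbf{x}_i-\mathbf{x}_l\|_F^2 m_{il}$, so the claim that ``the factor of $\tfrac12$ matches on both sides'' is exactly where the argument breaks. The honest conclusion of your computation is that the identity as literally stated holds only if either the right-hand sum is restricted to unordered pairs, or $\mathbf{M}$ is replaced by $\tfrac12\mathbf{A}\mathbf{A}^T$ (equivalently, the theorem is true up to an immaterial constant that does not affect the minimizer). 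To complete the proof you should fix one pair-counting convention, carry the $\tfrac12$ through explicitly, and state which of these two readings of the theorem you are proving, rather than asserting that the factors cancel.
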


In summary, by analyzing the relationship between K-Means and manifold learning, we integrate them into a unified clustering model. The construction of manifold structure $\mathbf{M}$ from labels $\mathbf{G}$ not only maintains the consistency between manifold structure and labels but also leverages both global and local information of the data for clustering, thereby enhancing the accuracy of clustering.

The right side of Eq.~(\ref{theorem_manifold}) is a calculation based on the sum of distances between points, in order to facilitate the calculation, we turn the distance minimization problem into the matrix trace minimization problem.
\begin{equation}\label{zhuanhuan}
	\begin{aligned}
		\mathop{\mathrm{min}}\limits_{\mathbf{G}}
		\sum\limits_{i,l} {d_{il}  \left\langle {{{({\mathbf{A}^i})}  },{{({\mathbf{A}^l})}  }}   \right\rangle}
		&=\mathop{\mathrm{min}}\limits_{\mathbf{G}} \mathrm{tr}(\mathbf{A}^T \mathbf{D} \mathbf{A}) \\
		&= \mathop{\mathrm{min}}\limits_{\mathbf{G}} \mathrm{tr}(\mathbf{G}^T \mathbf{D} \mathbf{G} \mathbf{P}^{-1}) \\
	\end{aligned}
\end{equation}
where $\mathbf{G} \in \mathbb{R}^{N \times K}$ denotes the label matrix, and $\mathbf{D} \in \mathbb{R}^{N \times N}$ represents the distance matrix.

The distance matrix $\mathbf{D}$ here is calculated based on different distance measurement methods, among which the square Euclidean distance is the most commonly used metric. It measures the square of the absolute distance between points in Euclidean space. The distance between the samples $i$-th and $l$-th can be expressed as: 
$d_{ED}({il})   = \left\| {\mathbf{x}_i- \mathbf{x}_l} \right\|_F^2$.

\section{Methodology}
\subsection{Our Model}


However, the model (\ref{zhuanhuan}) is difficult to solve,
so to optimize the model, we introduce Theorem \ref{theorem4}.

\begin{theorem}\label{theorem4}
	Given $n_1+\ldots+n_K=N$, the maximum value of Eq.(\ref{the4})  is reached  when $n_1=\ldots=n_K=\frac{N}{K}$. In this case, $\textbf{G}$ is discrete and exhibits a balanced class distribution.
	\begin{equation}\label{the4}
		\begin{aligned}
			&\mathop {{\rm{max}}}\limits_{{{\textbf{G}} }}  { \left\| { \textbf{G}} \right\|_{sp}^p} 
			\quad 
			s.t.{\kern 1pt} {\textbf{G}\geq0},\textbf{G}\textbf{1}=\textbf{1}
		\end{aligned}
	\end{equation}
\end{theorem}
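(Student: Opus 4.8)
The plan is to reduce this matrix optimization to a one‑dimensional question about the singular values of $\mathbf{G}$ and settle it with a concavity (Jensen) argument, then read off the equality case. Write $\sigma_1\ge\cdots\ge\sigma_K\ge 0$ for the singular values of $\mathbf{G}$, equivalently $\lambda_i=\sigma_i^2$ for the eigenvalues of the $K\times K$ Gram matrix $\mathbf{G}^{T}\mathbf{G}$. Then $\|\mathbf{G}\|_{sp}^{p}=\sum_{i=1}^{K}\sigma_i^{p}=\sum_{i=1}^{K}\lambda_i^{p/2}$ and $\sum_{i=1}^{K}\lambda_i=\mathrm{tr}(\mathbf{G}^{T}\mathbf{G})=\|\mathbf{G}\|_F^{2}$. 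The feasible set $\{\mathbf{G}\ge 0,\ \mathbf{G}\mathbf{1}=\mathbf{1}\}$ is compact and the objective continuous, so a maximizer exists; throughout I assume $0<p<2$, the regime in which the stated conclusion holds (for $p=2$ every discrete $\mathbf{G}$ ties, and for $p>2$ the maximizer concentrates into a single cluster).

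First I would bound the Frobenius norm using the row constraint. Fixing a row $i$, nonnegativity together with $\sum_j g_{ij}=1$ gives $\sum_j g_{ij}^2\le\big(\sum_j g_{ij}\big)^2=1$, with equality exactly when that row is a standard basis vector; summing over the $N$ rows yields $\|\mathbf{G}\|_F^{2}\le N$, with equality if and only if $\mathbf{G}$ is a $\{0,1\}$ partition–indicator matrix. In particular $\sum_{i=1}^{K}\lambda_i\le N$.

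Next I apply Jensen's inequality to the increasing concave map $t\mapsto t^{p/2}$ on $[0,\infty)$ (this is where $0<p<2$ enters), averaging over the $K$ eigenvalues:
\[
\sum_{i=1}^{K}\lambda_i^{p/2}\ \le\ K\Big(\tfrac1K\textstyle\sum_{i=1}^{K}\lambda_i\Big)^{p/2}\ \le\ K\big(N/K\big)^{p/2}\ =\ K^{1-p/2}N^{p/2},
\]
where the second inequality uses $\sum_i\lambda_i\le N$ and monotonicity. Since a balanced partition–indicator $\mathbf{G}$ (columns of squared norm $N/K$, mutually orthogonal) satisfies $\mathbf{G}^{T}\mathbf{G}=(N/K)\mathbf{I}_K$ and hence attains $\sum_{j=1}^{K}(N/K)^{p/2}=K^{1-p/2}N^{p/2}$, this bound is the maximum. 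A maximizer must therefore achieve equality throughout the chain: equality in Jensen forces all $\lambda_i$ equal, and equality in the monotonicity step forces $\sum_i\lambda_i=N$, i.e.\ $\|\mathbf{G}\|_F^{2}=N$; combining these two, $\mathbf{G}^{T}\mathbf{G}=(N/K)\mathbf{I}_K$ and, by the equality case of the Frobenius bound, $\mathbf{G}$ is discrete.

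It remains to translate $\mathbf{G}^{T}\mathbf{G}=(N/K)\mathbf{I}_K$ into cluster sizes. For a discrete $\mathbf{G}$ with cluster sizes $n_1,\dots,n_K$ one has $\mathbf{G}^{T}\mathbf{G}=\mathrm{diag}(n_1,\dots,n_K)$ with $\sum_j n_j=N$, so the isotropy condition holds precisely when $n_1=\cdots=n_K=N/K$; conversely every such balanced indicator matrix attains the maximum. Hence the maximizers of Eq.~(\ref{the4}) are exactly the discrete, class‑balanced assignments, which is the claim. I expect the main obstacles to be bookkeeping rather than depth: making the two equality cases interact correctly (the discreteness coming from the Frobenius step has to be consistent with the isotropy $\mathbf{G}^{T}\mathbf{G}\propto\mathbf{I}_K$ from the Jensen step), and flagging the standing hypotheses $0<p<2$ and $K\mid N$ under which ``the maximum is reached when $n_j=N/K$'' is literally true (otherwise one only gets the most nearly balanced integer partition).
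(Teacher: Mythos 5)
Your argument is correct in the regime you flag, and there is in fact no internal derivation to compare it against: the paper states Theorem~\ref{theorem4} without proof (only Theorem~\ref{theorem5} is proved), so your proof supplies the missing one. The two-step mechanism you use --- the row-simplex bound $\|\mathbf{G}\|_F^2=\sum_i\lambda_i\le N$ with equality exactly at $\{0,1\}$ indicator matrices, followed by the power-mean/Jensen bound $\sum_{i}\lambda_i^{p/2}\le K\bigl(\tfrac1K\sum_i\lambda_i\bigr)^{p/2}$ with equality exactly when $\mathbf{G}^T\mathbf{G}$ is isotropic --- is exactly the right tool, and as a bonus it yields the optimal value $K^{1-p/2}N^{p/2}$ and a characterization of \emph{all} maximizers, which is stronger than the stated claim. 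Your caveats are substantive rather than cosmetic: strict concavity of $t\mapsto t^{p/2}$ requires $0<p<2$; at $p=2$ the objective reduces to $\|\mathbf{G}\|_F^2$, so every discrete assignment (balanced or not) is optimal and the balance conclusion evaporates; for $p>2$ the maximizer concentrates all samples in a single cluster; and exact balance $n_j=N/K$ presupposes $K\mid N$. Since the paper's own experiments report the best results at $p=2$ (JAFFE) and $p=1.9$ (ORL), the restriction $p<2$ ought to appear explicitly in the theorem statement. Two small points: your parenthetical that for $K\nmid N$ the maximizer is ``the most nearly balanced integer partition'' does not follow from the argument as given (a non-discrete maximizer is not ruled out there) and would need a separate justification; and the consequence quoted after the theorem in the paper should read $\mathbf{G}^T\mathbf{G}=\frac{N}{K}\mathbf{I}$, equivalently $(\mathbf{G}^T\mathbf{G})^{1/2}=\sqrt{N/K}\,\mathbf{I}$, which is what your equality analysis actually delivers.
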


Theorem \ref{theorem4} tells us that Eq.(\ref{the4})  can achieve an approximate class equilibrium.
When the optimal solution is obtained, $(\mathbf{G}^T\mathbf{G})^{1/2} = \frac{N}{K}\mathbf{I}$ .
Therefore, we convert the model (\ref{zhuanhuan}) 
into a continuous model (\ref{model2}) 
\begin{equation}\label{model2}
	\begin{aligned}
		\mathop {{\rm{min}}}\limits_{{{\mathbf{G}} }} {\rm{tr}}({\mathbf{G}^T}{{\bf{D}} }{\mathbf{G}  })-\alpha { \left\| { \mathbf{G}} \right\|_{sp}^p} 
		\quad 	s.t.{\kern 1pt} {\mathbf{G}\geq0},\mathbf{G}\mathbf{1}=\mathbf{1}
	\end{aligned}
\end{equation}
when model (\ref{model2}) obtains the optimal solution, $\mathbf{G}$ is discrete and each class is balanced.

\subsection{Optimization}

The Schatten p-norm involves the sum of the singular values of the matrix and is generally not a smooth function, so when dealing with model (\ref{model2}) that contains the Schatten p-norm, direct optimization using gradient descent can become complex and difficult. So we introduce Theorem~\ref{theorem5}.

\begin{theorem}\label{theorem5}
	The derivative of $\| \mathbf{G} \|_{sp}^p$ with respect to $\mathbf{G}$  is:
	\begin{equation}\label{H1}
		{\bf{F}}= \frac{{\partial {{\left\| {\bf{G}} \right\|}_{sp}^p}}}{{\partial {\bf{G}}}}=p \mathbf{U} \mathbf{\Sigma}^{-1} |\mathbf{\Sigma} |^p \mathbf{V}^{\textrm{T}}
	\end{equation}
	where $\mathbf{G}= \mathbf{U} \mathbf{\Sigma} \mathbf{V}^{\textrm{T}}$, $\mathbf{\Sigma}^{-1}$is the Moore-Penrose pseudo-inverse of $\mathbf{\Sigma}$. 
\end{theorem}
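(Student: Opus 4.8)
The plan is to differentiate the Schatten $p$-norm raised to the $p$-th power, $\|\mathbf{G}\|_{sp}^p = \sum_i \sigma_i^p = \mathrm{tr}\bigl((\mathbf{G}^T\mathbf{G})^{p/2}\bigr)$, by composing the chain rule through the SVD. First I would rewrite $\|\mathbf{G}\|_{sp}^p = \mathrm{tr}\,f(\mathbf{G}^T\mathbf{G})$ with $f(t)=t^{p/2}$, and recall the standard matrix-function derivative identity: for a scalar function $f$ applied to a symmetric positive semidefinite matrix $\mathbf{S}$, $\frac{\partial}{\partial \mathbf{S}}\mathrm{tr}\,f(\mathbf{S}) = f'(\mathbf{S})$, where $f'(\mathbf{S})$ is defined spectrally. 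Then with $\mathbf{S}=\mathbf{G}^T\mathbf{G}$ I would apply the chain rule $\frac{\partial}{\partial \mathbf{G}}\mathrm{tr}\,f(\mathbf{G}^T\mathbf{G}) = 2\,\mathbf{G}\,f'(\mathbf{G}^T\mathbf{G})$, and substitute $f'(t) = \frac{p}{2}t^{p/2-1}$ so the factor of $2$ cancels and a factor $p$ emerges.

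Next I would diagonalize using the (thin) SVD $\mathbf{G}=\mathbf{U}\mathbf{\Sigma}\mathbf{V}^T$, so that $\mathbf{G}^T\mathbf{G} = \mathbf{V}\mathbf{\Sigma}^2\mathbf{V}^T$ and hence $f'(\mathbf{G}^T\mathbf{G}) = \mathbf{V}\,\bigl(\tfrac{p}{2}\mathbf{\Sigma}^{p-2}\bigr)\mathbf{V}^T$ on the range, interpreting inverse powers via the Moore–Penrose pseudo-inverse on the zero singular values. Multiplying, $\mathbf{G}\,f'(\mathbf{G}^T\mathbf{G}) = \mathbf{U}\mathbf{\Sigma}\mathbf{V}^T\mathbf{V}\,\tfrac{p}{2}\mathbf{\Sigma}^{p-2}\mathbf{V}^T = \tfrac{p}{2}\mathbf{U}\mathbf{\Sigma}^{p-1}\mathbf{V}^T$, and after restoring the factor of $2$ from the chain rule this gives $p\,\mathbf{U}\mathbf{\Sigma}^{p-1}\mathbf{V}^T$, which I would rewrite in the paper's notation as $p\,\mathbf{U}\mathbf{\Sigma}^{-1}|\mathbf{\Sigma}|^p\mathbf{V}^T$ (noting $\mathbf{\Sigma}^{-1}|\mathbf{\Sigma}|^p = \mathbf{\Sigma}^{p-1}$ entrywise on the nonzero singular values, with $\mathbf{\Sigma}^{-1}$ the pseudo-inverse handling the rest). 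An alternative, more self-contained derivation I could present instead uses the perturbation formula for singular values: for a simple singular value $\sigma_i$ with left/right singular vectors $\mathbf{u}_i,\mathbf{v}_i$, $\frac{\partial \sigma_i}{\partial \mathbf{G}} = \mathbf{u}_i\mathbf{v}_i^T$, hence $\frac{\partial}{\partial\mathbf{G}}\sum_i\sigma_i^p = \sum_i p\,\sigma_i^{p-1}\mathbf{u}_i\mathbf{v}_i^T = p\,\mathbf{U}\mathbf{\Sigma}^{p-1}\mathbf{V}^T$, assembling the same result.

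The main obstacle is handling non-differentiability carefully: the Schatten $p$-norm power is not smooth at matrices with repeated or vanishing singular values, so strictly speaking the formula is valid on the dense open set where all nonzero singular values are simple, and at zero singular values the pseudo-inverse convention must be invoked (and for $p<1$ the expression even blows up as $\sigma_i\to 0$, so one works on the relative interior). I would state this regularity caveat explicitly, justify the singular-value perturbation identity $\partial\sigma_i/\partial\mathbf{G}=\mathbf{u}_i\mathbf{v}_i^T$ from first-order perturbation theory (differentiating $\sigma_i = \mathbf{u}_i^T\mathbf{G}\mathbf{v}_i$ and using that the variations of $\mathbf{u}_i,\mathbf{v}_i$ contribute zero by orthogonality and the defining relations $\mathbf{G}\mathbf{v}_i=\sigma_i\mathbf{u}_i$, $\mathbf{G}^T\mathbf{u}_i=\sigma_i\mathbf{v}_i$), and then the rest is the routine chain-rule bookkeeping sketched above.
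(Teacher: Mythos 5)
Your proposal is correct, and it reaches the paper's formula $p\,\mathbf{U}\mathbf{\Sigma}^{-1}|\mathbf{\Sigma}|^p\mathbf{V}^T=p\,\mathbf{U}\mathbf{\Sigma}^{p-1}\mathbf{V}^T$ by a genuinely different route. The paper works with the differential of the SVD itself: it writes $\|\mathbf{G}\|_{sp}^p=\mathrm{tr}(|\mathbf{\Sigma}|^p)$, expands $\partial\mathbf{G}=\partial\mathbf{U}\,\mathbf{\Sigma}\mathbf{V}^T+\mathbf{U}\,\partial\mathbf{\Sigma}\,\mathbf{V}^T+\mathbf{U}\mathbf{\Sigma}\,\partial\mathbf{V}^T$, and uses the antisymmetry of $\mathbf{U}^T\partial\mathbf{U}$ and $\mathbf{V}^T\partial\mathbf{V}$ (from $\partial(\mathbf{U}^T\mathbf{U})=0$) to show that the $\partial\mathbf{U}$ and $\partial\mathbf{V}$ contributions vanish inside the trace, leaving $\mathrm{tr}(\partial\mathbf{\Sigma})=\mathrm{tr}(\mathbf{U}^T\partial\mathbf{G}\,\mathbf{V})$, from which the gradient is read off. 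You instead avoid differentiating the orthogonal factors altogether: your main route applies the spectral-function identity $\partial\,\mathrm{tr}\,f(\mathbf{S})=\mathrm{tr}(f'(\mathbf{S})\,\partial\mathbf{S})$ to $\mathbf{S}=\mathbf{G}^T\mathbf{G}$ with $f(t)=t^{p/2}$, getting $p\,\mathbf{G}(\mathbf{G}^T\mathbf{G})^{p/2-1}$ and then diagonalizing; your alternative uses the perturbation formula $\partial\sigma_i/\partial\mathbf{G}=\mathbf{u}_i\mathbf{v}_i^T$. What your approach buys is cleaner bookkeeping (the paper's insertion of the weight $p|\mathbf{\Sigma}|^{p-1}$ into the trace identity, and its repeated "$\mathrm{tr}(\cdot)/\partial\mathbf{G}$" notation, are somewhat loose, whereas your chain-rule computation is a one-line closed form) and an explicit statement of the regularity caveats the paper glosses over: the formula requires simple nonzero singular values for literal differentiability, the pseudo-inverse convention at zero singular values, and for $p<1$ the expression degenerates as $\sigma_i\to 0$. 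What the paper's route buys is self-containedness: it derives the needed cancellations from scratch rather than invoking the matrix-function derivative identity, which you would otherwise need to cite or prove (your singular-value perturbation argument, differentiating $\sigma_i=\mathbf{u}_i^T\mathbf{G}\mathbf{v}_i$, fills that role adequately). Either of your two derivations is acceptable as a proof of the theorem as stated.
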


\begin{proof} 
	Let $\|\mathbf{G}\|_{sp}^p = \sum_{i=1} \sigma_i^p(\mathbf{G})$ be the p power of the Schatten p-norm of matrix $\mathbf{G}$, where	$\sigma_i(\mathbf{G})$ denote the $i$-th largest singular value of $\mathbf{G}$.
	We express the singular values in terms of the eigenvalues of $\mathbf{G}^T\mathbf{G}$. Specifically,
	\begin{equation}\label{SP2}
		\begin{aligned}
			\sigma_i(\mathbf{G}) =  {\tau_i\sqrt{\mathbf{G}^T\mathbf{G}}} = {\tau_i{(\mathbf{A}^\frac{1}{2}) }}
		\end{aligned}
	\end{equation}
	where $\tau_i(\mathbf{G}^T\mathbf{G})$ denote  $i$-th largest eigenvalue of $\mathbf{G}^T\mathbf{G}$, $\mathbf{A}=\mathbf{G}^T\mathbf{G}$.
	
	Thus, the $p$-th power of the singular values can be expressed as:
	\begin{equation}\label{SP3}
		\begin{aligned}
			\sigma_i^p(\mathbf{G}) =  {\tau_i{(\mathbf{A}^\frac{p}{2}) }}
		\end{aligned}
	\end{equation}
	
	The Schatten p-norm of $\mathbf{G}$ is then:
	\begin{equation}\label{sp4}
		\begin{aligned}
			\|\mathbf{G}\|_{sp}^p &=  tr(\mathbf{A}^\frac{p}{2})=tr(({\mathbf{G}^T\mathbf{G}})^\frac{p}{2})\\
			&=tr((\mathbf{V}^T\mathbf{V}\mathbf{\Sigma}^2 )^\frac{p}{2})
			=tr(\left| \mathbf{\Sigma} \right| ^p)
		\end{aligned}
	\end{equation}
	where $\mathbf{\Sigma}$ denotes the diagonal matrix of singular values.
	
	Next, the gradient of the Schatten p-norm with respect to $\mathbf{G}$ is given by:
	\begin{equation}\label{S}
		\frac{\partial \|\mathbf{G}\|_{sp}^p}{\partial \mathbf{G}} = \frac{\partial tr(\left| \mathbf{\Sigma} \right| ^p)}{\partial \mathbf{G}}=\frac{tr(\partial  \left| \mathbf{\Sigma} \right| ^p)}{\partial \mathbf{G}}
	\end{equation}
	
	The subdifferential of $\mathbf{G}$ is:
	\begin{equation}\label{S1}
		\frac{\partial \left| \mathbf{\Sigma} \right|}{\partial \mathbf{G}}
		=\left| \mathbf{\Sigma} \right| \mathbf{\Sigma}^{-1} \frac{\partial \mathbf{\Sigma}}{\partial \mathbf{G}}
	\end{equation}
	
	and thus, the gradient of $|\mathbf{\Sigma}|^p$ with respect to $\mathbf{G}$ is:
	\begin{equation}\label{S2}
		\frac{\partial \left| \mathbf{\Sigma} \right|^p}{\partial \mathbf{G}}=p\left| \mathbf{\Sigma} \right|^{p-1}\frac{\partial \left| \mathbf{\Sigma} \right|}{{\partial \mathbf{G}}}=\frac{p\left| \mathbf{\Sigma} \right|^{p-1}\partial \left| \mathbf{\Sigma} \right|}{\partial \mathbf{G}}
	\end{equation}
	
	Substituting Eq.~(\ref{S1}) and Eq.~(\ref{S2}) back into Eq.~(\ref{S})
	\begin{equation}\label{S3}
		\frac{\partial \|\mathbf{G}\|_{sp}^p}{\partial \mathbf{G}} = \frac{tr(p\left| \mathbf{\Sigma} \right|^{p-1}\left| \mathbf{\Sigma} \right|\mathbf{\Sigma}^{-1}\partial \mathbf{\Sigma}
			)}	{\partial \mathbf{G}}
	\end{equation}
	
	According to $\mathbf{G}=\mathbf{U}\mathbf{\Sigma}\mathbf{V}^T$,
	\begin{equation}\label{S4}
		\begin{aligned}
			{\partial \mathbf{G}}=\partial \mathbf{U}\mathbf{\Sigma}\mathbf{V}^T+\mathbf{U}\partial \mathbf{\Sigma}\mathbf{V}^T+\mathbf{U}\mathbf{\Sigma}\partial \mathbf{V}^T\\
		\end{aligned}
	\end{equation}
	
	Rearranging the second term:
	\begin{equation}\label{S41}
		\begin{aligned}
			\mathbf{U}\partial \mathbf{\Sigma}\mathbf{V}^T={\partial \mathbf{G}}-\partial \mathbf{U}\mathbf{\Sigma}\mathbf{V}^T-\mathbf{U}\mathbf{\Sigma}\partial \mathbf{V}^T\\
		\end{aligned}
	\end{equation}
	
	Next, we multiply both sides of the equation by $\mathbf{U}^T$ and $\mathbf{V}$  to obtain:
	\begin{equation}\label{S5}
		\begin{aligned}
			\mathbf{U}^T\mathbf{U}\partial \mathbf{\Sigma}\mathbf{V}^T\mathbf{V}&={\mathbf{U}^T\partial \mathbf{G}}\mathbf{V}-\mathbf{U}^T\partial \mathbf{U}\mathbf{\Sigma}\mathbf{V}^T\mathbf{V}\\
			&-\mathbf{U}^T\mathbf{U}\mathbf{\Sigma}\partial \mathbf{V}^T\mathbf{V}\\
		\end{aligned}
	\end{equation}
	
	This simplifies to:
	\begin{equation}\label{S51}
		\begin{aligned}
			\partial \mathbf{\Sigma}&={\mathbf{U}^T\partial \mathbf{G}}\mathbf{V}-\mathbf{U}^T\partial \mathbf{U}\mathbf{\Sigma}-\mathbf{\Sigma}\partial \mathbf{V}^T\mathbf{V}\\
		\end{aligned}
	\end{equation}
	
	Consider the identity:
	\begin{equation}\label{S6}
		\begin{aligned}
			0=\partial \mathbf{I}=\partial(\mathbf{U}^T\mathbf{U})=\partial\mathbf{U}^T\mathbf{U}+\mathbf{U}^T\partial\mathbf{U}
		\end{aligned}
	\end{equation}
	
	From this, we can get:
	\begin{equation}\label{S7}
		\begin{aligned}
			tr(\mathbf{U}^T\partial\mathbf{U}\mathbf{\Sigma})&=tr((\mathbf{U}^T\partial\mathbf{U}\mathbf{\Sigma})^T)\\
			&=tr(\mathbf{\Sigma}^T\partial\mathbf{U}^T\mathbf{U})\\
			&=-tr(\mathbf{\Sigma}\mathbf{U}^T\partial\mathbf{U})\\
			&=-tr(\mathbf{U}^T\partial\mathbf{U}\mathbf{\Sigma})\\
			&=-tr(\mathbf{U}^T\partial\mathbf{U}\mathbf{\Sigma})\\
			&=0
		\end{aligned}
	\end{equation}
	
	
	In a similar way,
	\begin{equation}\label{S8}
		\begin{aligned} tr(\mathbf{\Sigma}\partial\mathbf{V}^T\mathbf{V})=0
		\end{aligned}
	\end{equation}
	
	Combine  Eq.~(\ref{S51}) to  Eq.~(\ref{S8}), we have
	\begin{equation}\label{S9}
		\begin{aligned} tr(\partial\mathbf{\Sigma})=tr(\mathbf{U}^T\partial\mathbf{G}\mathbf{V})
		\end{aligned}
	\end{equation}
	
	Substituting Eq.~(\ref{S9}) back into Eq.~(\ref{S3}),
	\begin{equation}\label{S10}
		\begin{aligned}
			\frac{\partial \|\mathbf{G}\|_{sp}^p}{\partial \mathbf{G}} &= \frac{tr(p\left| \mathbf{\Sigma} \right|^{p-1}\left| \mathbf{\Sigma} \right|\mathbf{\Sigma}^{-1}\mathbf{U}^T\partial\mathbf{G}\mathbf{V}
				)}	{\partial \mathbf{G}}\\
			&=\frac{tr(\mathbf{V}p\left| \mathbf{\Sigma} \right|^{p-1}\left| \mathbf{\Sigma} \right|\mathbf{\Sigma}^{-1}\mathbf{U}^T\partial\mathbf{G}
				)}	{\partial \mathbf{G}}\\
			&=(\mathbf{V}p\left| \mathbf{\Sigma} \right|^{p-1}\left| \mathbf{\Sigma} \right|\mathbf{\Sigma}^{-1}\mathbf{U}^T)^T\\
			&=(\mathbf{V}p\left| \mathbf{\Sigma} \right|^{p}\mathbf{\Sigma}^{-1}\mathbf{U}^T)^T\\
			&=p\mathbf{U}\mathbf{\Sigma}^{-1} \left| \mathbf{\Sigma} \right|^{p}		
			\mathbf{V}^T\\
		\end{aligned}
	\end{equation}
\end{proof}

So we  approximate  Eq.(\ref{model2}) to Eq.(\ref{au}), $\mathbf{G}$ is updated by solving the following problem 
\begin{equation}\label{au}
	\begin{aligned}
		\min_{\mathbf{G}\mathbf{1}=\mathbf{1}, \mathbf{G} \geqslant \mathbf{0} } 
		tr(\mathbf{G^{\rm T}DG})   - \alpha tr(\mathbf{F^{\rm T}G})
	\end{aligned}
\end{equation}

The solution for $\mathbf{g}^i$ is
\begin{equation}\label{Solveyi}
	\begin{aligned}
		{g_{ib}} = \left\{ \begin{array}{l}
			1, b = \mathop {\arg \min }\limits_j {(2{{\bf{G}}^{\rm{T}}}{{\bf{d}}_i} - \alpha {({{\bf{f}}^i})^{\rm{T}}})_j}\\
			0, {\rm{otherwise}}.
		\end{array} \right.
	\end{aligned}
\end{equation}

	%
%
%
%

\begin{figure*}[!ht]
	\begin{minipage}[t]{0.45\textwidth}
		\centering
		\includegraphics[scale=0.45]{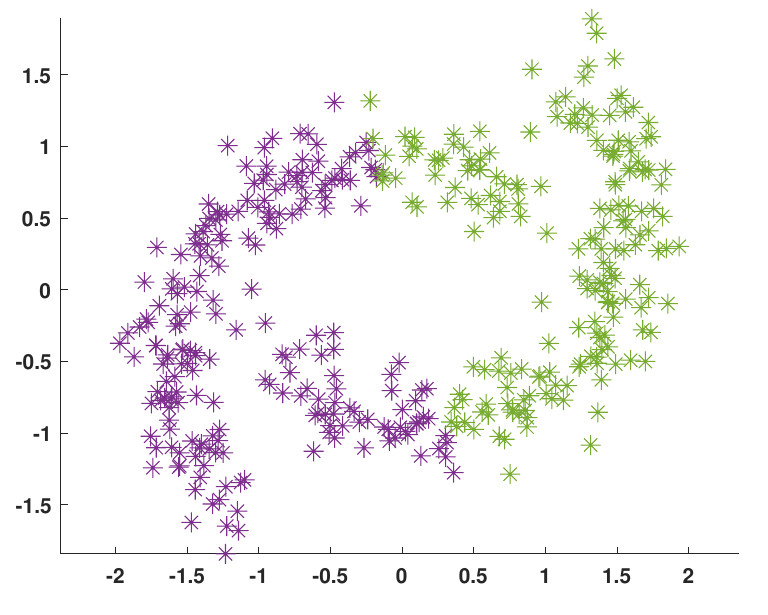}
		\subcaption*{(a)K-means}
	\end{minipage}
	\begin{minipage}[t]{0.45\textwidth}
		\centering
		\includegraphics[scale=0.45]{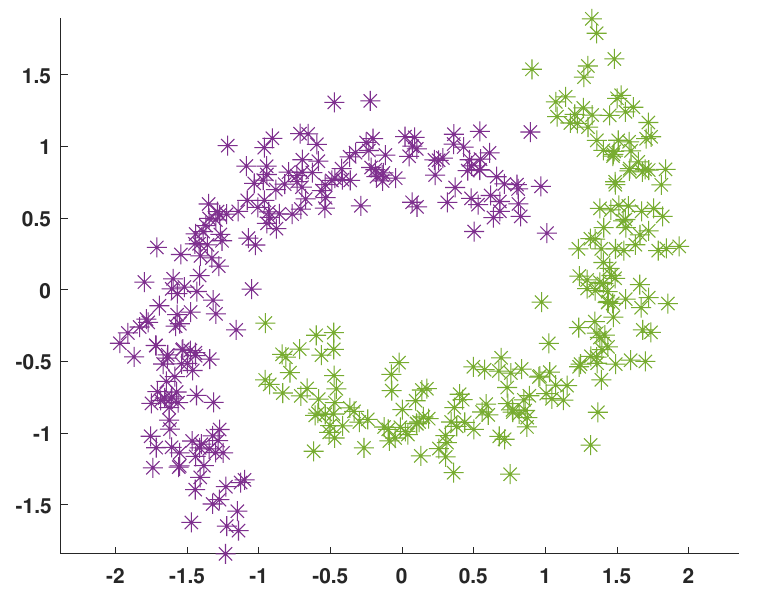}
		\subcaption*{(a)Our}
	\end{minipage}
	\caption{Visual comparison of two-moon under different algorithms.
		Different colors represent different classes.}\label{twospiral_ours}
\end{figure*}

\section{Experiments}

\subsection{Experiments of Artificial Dataset}
In this section, we conduct experiments to validate the effectiveness of our approach for clustering nonlinearly separable data. For this purpose, we create two synthetic datasets. First, we generate a  two-moon  dataset, which consists of 400 samples, each represented in two-dimensional coordinates. 
In addition, we construct a synthetic dataset called the two-spiral dataset, which is also characterized by nonlinear separable clustering properties. 
%
%

\subsection{Experimental on Benchmark Datasets}

\begin{table*}[ht]
	\caption{The clustering performances  on the  JAFFE, ORL datasets.}\label{result1}
	\centering
	\begin{center}
		\resizebox{1\columnwidth}{!}{
			\begin{tabular}{l | l l l l l l| l l l l l l}
				\toprule
				Datasets &\multicolumn{6}{c|}{JAFFE} &\multicolumn{6}{c}{ORL}  
				\\
				\midrule
				
				Methods & ACC & NMI & Purity &Precision  &F-score&ARI& ACC & NMI & Purity &Precision  &F-score&ARI \\
				\midrule
				CAN~\cite{can14kdd}&0.9108    &0.9175    &0.9108    &0.8236       &0.8536    &0.8375			
				&0.5350 &0.7173 &0.6225&0.1504 &0.2462&0.2173 \\
				LCSOG~\cite{LCSOG}&\textbf{0.9671}&\textbf{0.9623}&\textbf{0.9671}&0.9323&0.9335&0.9291&0.6075&0.7839&0.6725&0.3320&0.4385&0.4212\\
				{K}-means~\cite{hartigan1979algorithm} &0.7085&0.8010&0.7455&0.5970&0.6846&0.6450&0.5198&0.7234&0.5705&0.3037&0.3722&0.3543 \\
				KKM~\cite{KK} &0.8028    &0.8246    &0.8263    &0.6656        &0.7238    &0.6917&0.5425    &0.7440    &0.5800    &0.3658       &0.4168    &0.4014\\
				RKM~\cite{RMK}   &0.8310	&0.8159	&0.8310	&0.7374&0.7369&0.7090&0.5000	&0.7143	&0.5200	&0.3756&0.3756&0.3611
				\\
				CDKM~\cite{CDKM}   &0.7451&0.8246&0.7812&0.6381	&0.7180	&0.6828
				&0.5507	&0.7529	&0.6090	&0.3692		&0.4340	&0.4185
				
				\\
				K-sum~\cite{Ksum}	&0.8789	&0.8764	&0.8789	&0.8034&0.8130&0.7929&0.6337	&0.7940	&0.6562&0.5276&0.5298&0.5189		\\
				K-sumx~\cite{Ksum}   &0.8930	&0.9013	&0.8977	&0.8319&0.8397&0.8225&0.5877	&0.7693	&0.6060&0.4742&0.4786&0.4665		\\
				{Our-ED}	&\textbf{0.9671}    &0.9548	&\textbf{0.9671}	&\textbf{0.9328}	&\textbf{0.9378}	&\textbf{0.9311}&0.6575&0.8042&0.6750&0.5527    &0.5574   &0.5471			
				\\
				
				\bottomrule
		\end{tabular}}
	\end{center}
\end{table*}

\subsubsection{Results}
After conducting experiments, we get the corresponding measurement results. These results are presented in a detailed comparison across Tables \ref{result1} . 
Based on this analysis, we can draw the following conclusions:

First, our model can adapt to different types of distance matrices. In the experiment, we compare Euclidean distance  (Our-ED) and KNN distance (Our-KNN), and the results demonstrate that KNN distance significantly enhances clustering performance. This improvement is attributed to the KNN distance's ability to effectively capture the nonlinear relationships between data points, thereby achieving more accurate clustering results. Consequently, our model can  handle complex datasets by  KNN distance metric.
Then,  we observe that algorithms dependent on the centroid matrix---{K}-means, RKM---perform less effectively on the baseline dataset compared to the K-sum and K-sumx. Specifically, K-sum and K-sumx algorithms integrate spectral clustering and {K}-means in the clustering process, avoiding the centroid matrix. This integration enables them to exhibit higher accuracy when dealing with complex datasets.
We fuse manifold learning and k-means into a unified model with a clever combination of Schatten p-norm. It can not only capture the manifold structure of the data, but also maintain the consistency of the manifold structure and  label, and naturally maintain the class balance, showing better clustering performance

\subsubsection{Parameters setting and analysis}

For the JAFFE, the optimal clustering effect is achieved with $\alpha = 1000$ and $C = 212$.
On the ORL, the model demonstrates the best clustering performance with $\alpha = 8500000$ and $C = 10$.
These findings underscore the importance of carefully selecting the Schatten p-norm weight $\alpha$ and the KNN parameter $C$
to significantly enhance the model's clustering effectiveness.

\subsubsection{Discussion of the value of p} 
To investigate the effect of the parameter $p$ in the Schatten p-norm on clustering results, we conducted experiments using the our-KNN algorithm on the JAFFE, ORL datasets, while keeping other parameters fixed. The value of $p$ ranged from 0.1 to 2.
The results show that for the JAFFE dataset, the best clustering performance is achieved when $p = 2$; for ORL, $p = 1.9$ yields the optimal performance.

\section{Conclusion}
This paper presents a new manifold clustering framework, which fuses manifolds and distances.
The manifold structure is constructed by labels, and the labels are optimized by clustering on the manifold, thus ensuring the consistency of manifold structure and labels.
Further,  we effectively maintain the class balance of the data by maximizing the Schatten p-norm of the label.
Additionally, by flexibly applying different distance matrices,  our model performs well on nonlinearly separable data, 
significantly enhancing the clustering performance. A large number of experiments fully verify the  effectiveness of this method.

\section*{Acknowledgment}
The authors would like to thank the editors and reviewers for their helpful comments and suggestions.

\bibliographystyle{elsarticle-num}
\bibliography{ref}

\end{document}